\DeclarePairedDelimiter\abs{\lvert}{\rvert}
\DeclarePairedDelimiter\norm{\lVert}{\rVert}
\DeclareMathOperator*{\argmax}{arg\,max}
\DeclareMathOperator*{\summ}{\textstyle\sum}
\DeclareMathOperator*{\prodd}{\textstyle\prod}
\DeclareMathOperator*{\intt}{\textstyle\int}
\newcommand{\dd}{\mathrm{d}}
\newcommand{\convinp}{\overset{\mathbb{P}}{\longrightarrow}} 
\newcommand{\convas}{\overset{\mathrm{a.s.}}{\longrightarrow}} 
\definecolor{my-blue}{RGB}{0, 0, 139}
\definecolor{my-green}{RGB}{34, 139, 34}
\definecolor{my-red}{RGB}{220, 20, 60}
\definecolor{my-orange}{RGB}{255, 140, 0}
\definecolor{my-purple}{RGB}{148, 0, 211}
\theoremstyle{plain}
\newtheorem{proposition}{Proposition}
\newtheorem{corollary}{Corollary}
\theoremstyle{definition}
\newtheorem{assumption}{Assumption}
\theoremstyle{remark}
\newtheorem*{remark}{Remark}
\pgfplotsset{
    compat=1.18,
    every mark/.append style={solid},
}
\title{Recursive Nested Filtering for Efficient Amortized Bayesian Experimental Design}
\author{%
  Sahel Iqbal\thanks{Correspondence address: sahel.iqbal@aalto.fi} \\
  Aalto University \\
  \And
  Hany Abdulsamad \\
  Aalto University \\
  \AND
  Sara P\'erez-Vieites \\
  Aalto University \\
  \And
  Simo S\"arkk\"a \\
  Aalto University \\
  \And
  Adrien Corenflos \\
  University of Warwick \\
}
\begin{document}

\setlength{\abovedisplayskip}{3pt}
\setlength{\belowdisplayskip}{3pt}

\maketitle

\begin{abstract}
  This paper introduces the Inside--Out Nested Particle Filter (IO--NPF), a novel, fully recursive, algorithm for amortized sequential Bayesian experimental design in the non-exchangeable setting. We frame policy optimization as maximum likelihood estimation in a non-Markovian state-space model, achieving (at most) $\mathcal{O}(T^2)$ computational complexity in the number of experiments. We provide theoretical convergence guarantees and introduce a backward sampling algorithm to reduce trajectory degeneracy. IO--NPF offers a practical, extensible, and provably consistent approach to sequential Bayesian experimental design, demonstrating improved efficiency over existing methods.
\end{abstract}

\section{Introduction}
Bayesian experimental design~\citep[BED,][]{lindley1956measure, chaloner1995bayesian} is a mathematical framework for designing experiments to maximize information about the parameters of a statistical model. The primary objective of BED is to select a user-controllable input, known as the \emph{design}, that is optimal in the sense that the resulting observation is, on average, maximally informative. This optimal design is obtained by solving a joint maximization and integration problem, which is typically intractable and thus requires approximation~\citep{kueck2009inference}. The difficulty is further compounded in sequential experiments, where the conventional \emph{myopic} strategy of optimizing one experiment at a time is both suboptimal and time-consuming~\citep{rainforth2024modern, huan2024optimal}.

An alternative to the myopic approach for sequential BED is to learn a \emph{policy} instead of optimizing for individual designs~\citep{huan2016sequential,foster2021deep,ivanova2021implicit}.
In this approach, we pay an upfront cost to learn the policy, but experiments can be performed in real-time. A recent addition to this literature is \citet{iqbal2024nesting}, which leverages the duality between control and inference~\citep{attias2003planning, toussaint2006probabilistic} to rephrase policy optimization for BED as a maximum likelihood estimation~(MLE) problem in a nonlinear, non-Markovian state-space model~(SSM). \citet{iqbal2024nesting} also introduced the \emph{Inside--Out~(IO) SMC\textsuperscript{2}} algorithm, a nested particle filter targeting this non-standard SSM.


IO--SMC\textsuperscript{2} was shown to be a sample-efficient alternative to state-of-the-art BED amortization methods~\citep{iqbal2024nesting}. However, this approach has two key limitations. First, it is not a fully recursive algorithm, as it requires reprocessing all past observations for each experiment within an inner particle filter, which incorporates a Markov chain Monte Carlo (MCMC) kernel. Second, the transition density of this MCMC kernel cannot be computed, and generating samples from the smoothing distribution is done via genealogy tracking~\citep[Section 4.1]{kitagawa1996montecarlo}, which degenerates for long experiment sequences $T \gg 1$.
%
In view of this, our contributions are as follows:
\begin{enumerate}
  \item We propose a novel, \emph{fully recursive}, provably consistent, nested particle filter that performs sequential BED, which we refer to as the Inside--Out NPF~(IO--NPF). IO--NPF replaces the MCMC kernel in IO--SMC\textsuperscript{2} with the jittering kernel from the \emph{nested particle filter}~(NPF) of \citet{crisan2017uniform, crisan2018nested}. 
  \item We propose a Rao-Blackwellized~(Appendix~\ref{app:backward-sampling}) backward sampling scheme~\citep{godsill2004montecarlo} for the IO--NPF to counter the degeneracy of the genealogy tracking smoother, thus improving the overall amortization performance. 
\end{enumerate}

\section{Problem Setup}
Let $\theta$ denote the parameters of interest, $\xi_t$ the design, and $x_t$ the outcome of an experiment $t$, where $t \in \{0, 1, \dots, T\}$. For brevity, we define the augmented state $z_t \coloneq \{x_t, \xi_{t-1}\}$ for $t \geq 1$ and $z_0 \coloneq \{x_0\}$. We are interested in the non-exchangeable data setting where the outcomes, conditionally on $\theta$ and the designs, $\xi_{0:T-1}$, follow Markovian dynamics,
\begin{equation}
  p(x_{0:T} \mid \xi_{0:T-1}, \theta) = p(x_0) \prodd_{t=1}^T f(x_t \mid x_{t-1}, \xi_{t-1}, \theta).
\end{equation}
The designs $\xi_t$ are sampled from a stochastic policy $\pi_\phi(\xi_t \mid z_{0:t})$ that depends on the history until experiment $t$ and is parameterized by $\phi$. The joint density of the states, designs, and parameters can then be expressed as
$
    p_{\phi}(z_{0:T}, \theta) = p(\theta) p(z_{0}) \prodd_{t=1}^T p_{\phi}(z_{t} \mid z_{0:t-1}, \theta),
$
where $p(z_0) = p(x_0)$ and $p_{\phi}(z_{t} \mid z_{0:t-1}, \theta) = f(x_t \mid x_{t-1}, \xi_{t-1}, \theta) \, \pi_\phi(\xi_{t-1} \mid z_{0:t-1})$.

Our goal is to optimize the \emph{expected information gain}~\citep[EIG,][]{lindley1956measure} objective with respect to the policy parameters $\phi$. The EIG is defined as
$
\mathcal{I}(\phi) \coloneq \mathbb{E}_{p_{\phi}(z_{0:T})} \bigl[ \mathbb{H}\bigl[p(\theta)\bigr] - \mathbb{H}[p(\theta \mid z_{0:T})] \bigr]$
where $\mathbb{H}$ denotes the entropy of a distribution. Hence, the EIG represents the expected reduction in entropy in the parameters $\theta$ under the policy $\pi_\phi$, and the corresponding optimization problem is $\phi^* \coloneq \argmax_\phi \,\, \mathcal{I}(\phi)$.

\subsection{The Dual Inference Problem}\label{sec:dual-inference-problem}
 If the noise in the dynamics is static, then, equivalently,~\citep[Proposition 1]{iqbal2024nesting}
\begin{equation}\label{eq:eig_constant_noise}
  \mathcal{I}(\phi) \equiv \mathbb{E}_{p_{\phi}(z_{0:T})} \left[ \summ_{t=1}^T r_{t}(z_{0:t}) \right],
\end{equation}
where '$\equiv$' denotes equality up to an additive constant and
\begin{equation}
  r_{t}(z_{0:t}) \coloneq - \log \intt p(\theta \mid z_{0:t-1}) f(x_t \mid x_{t-1}, \xi_{t-1}, \theta) \, \dd \theta.
\end{equation}
Following \citet{toussaint2006probabilistic}, we define a \emph{potential function} $g_{t}(z_{0:t}) \coloneq \exp \bigl\{ \eta \, r_{t}(z_{0:t}) \bigr\}$, where $\eta > 0$, and construct a non-Markovian Feynman-Kac model~\citep{delmoral2004feynman} $\Gamma_t$,
\begin{equation}\label{eq:feynman-kac-model}
    \Gamma_t(z_{0:t}; \phi) = \tfrac{1}{Z_t(\phi)} \, p(z_0) \prodd_{s=1}^t p_\phi(z_s \mid z_{0:s-1}) \, g_s(z_{0:s}), \qquad 0 \leq t \leq T.
\end{equation}
Here, $p_\phi(z_t \mid z_{0:t-1}) = \int p_\phi(z_t \mid z_{0:t-1}, \theta) \, p(\theta \mid z_{0:t-1}) \, \dd \theta$ are the marginal dynamics under the filtered posterior $p(\theta \mid z_{0:t-1})$ and $Z_t(\phi) = \int g_{1:t}(z_{0:t}) \, p_\phi(z_{0:t}) \, \dd z_{0:t}$ is the normalizing constant.

\citet{iqbal2024nesting} showed that maximizing $Z_t(\phi)$ is equivalent to maximizing a proxy EIG objective. The authors performed this maximization by evaluating the score $\mathcal{S}(\phi)$ under samples from $\Gamma_T(\cdot; \phi)$
\begin{align*}
     \mathcal{S}(\phi) \coloneq \nabla_\phi \log Z_T(\phi) = \intt \nabla_\phi \log p_\phi(z_{0:T}) \, \Gamma_T(z_{0:T}; \phi) \, \dd z_{0:T},
\end{align*}
and using a Markovian score climbing procedure~\citep{naesseth2020markovian} to perform policy optimization, see details in Appendix~\ref{app:policy-amortization}. In \citet{iqbal2024nesting}, sampling from $\Gamma_T(\cdot; \phi)$ was done by constructing a \emph{non-recursive} particle filter, named Inside--Out SMC\textsuperscript{2}. The following section outlines a novel and efficient procedure to generate these samples in a \emph{fully recursive} manner.

\section{The Inside--Out Nested Particle Filter (IO--NPF)}
For a bootstrap particle filter~\citep[Chapter 10]{chopin2020introduction} targeting $\Gamma_t$, we need to (a) draw importance samples from $p_\phi(z_t \mid z_{0:t-1})$, and (b) assign importance weights proportional to the potential function $g_t$ to the samples. Both these operations require computing integrals with respect to $p(\theta \mid z_{0:t-1})$, which is intractable in general. IO--SMC\textsuperscript{2} provided a solution by nesting two particle filters: one to approximate $\Gamma_t$ and generating trajectories $\{z_{0:t}^n\}_{1 \leq n \leq N}$ for $t = 0, \dots, T$; and another inner particle filter to approximate $p(\theta \mid z_{0:t}^n)$ with an empirical distribution,
\begin{equation}
  p(\theta \mid z_{0:t}^n) \approx \hat{p}(\theta \mid z_{0:t}^n) \coloneq \tfrac{1}{M} \summ_{m=1}^M \delta_{\theta_t^{nm}}(\theta).
\end{equation}
Here, $\delta_\theta$ is the Dirac delta function centered at $\theta$ and $\{\theta_t^{nm}\}_{1 \leq m \leq M}$ is the set of particles associated with the trajectory $z_{0:t}^n$. The marginal transition density $p_\phi$ and the potential $g_t$ are then approximated by computing the necessary integrals with respect to this approximate filtering distribution $\hat{p}(\theta \mid z_{0:t}^n$).
This approximate distribution is obtained via a particle filter using a Markov kernel $\kappa_t$ targeting $p(\theta \mid z^n_{0:t})$. This means that the sequence of observations and designs has to be reprocessed from scratch, leading to a computational complexity of $\mathcal{O}(t)$ at every time step.

An alternative, which is used in the nested particle filter of \citet[NPF,][]{crisan2018nested} to achieve a recursive algorithm, and which we borrow for the IO--NPF, is to jitter~\citep{liu2001combined} the $\theta$-particles using a Markov kernel $\kappa_M$ that has $\mathcal{O}(1)$ cost. The jittered particles will no longer be samples from the true posterior, but by scaling the perturbation induced by the kernel as a function of the number of samples $M$, we can control the error incurred through jittering~\citep[see][Section 5.1]{crisan2018nested}.


The Inside--Out NPF algorithm, presented in Algorithm~\ref{alg:io-npf}~(Appendix~\ref{app:details-algo}), is a nested particle filter targeting $\Gamma_t$ with the same algorithmic structure as IO--SMC\textsuperscript{2}, but with the jittering kernel from the NPF used to rejuvenate the $\theta$--particles. This seemingly small algorithmic change leads to two major consequences. First, it makes the algorithm fully recursive, with $\mathcal{O}(NMT)$ computational complexity against the $\mathcal{O}(NMT^2)$ of IO--SMC\textsuperscript{2}. Second, it allows us to construct a backward sampling algorithm to generate less degenerate trajectories from $\Gamma_t$, which we describe in Section~\ref{sec:backward-sampling}.

We adapt results from \citet{crisan2018nested} to establish the following proposition, proved in Appendix~\ref{app:prop-proof}, for the consistency of the IO--NPF.
\begin{proposition}[Consistency]\label{prop:consistency}
  Let $\Gamma_t^M(z_{0:t})$ denote the marginal target distribution of Algorithm~\ref{alg:io-npf}. Under technical conditions listed in Appendix~\ref{app:prop-proof}, for all bounded functions $h$, we have
  \begin{equation}\label{eq:consistency}
    \lim_{M \to \infty} \mathbb{E}_{\Gamma_t^M}\bigl[h(z_{0:t})\bigr] = \mathbb{E}_{\Gamma_t}\bigl[h(z_{0:t})\bigr].
  \end{equation}
\end{proposition}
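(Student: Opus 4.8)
The plan is to regard $\Gamma_t^M$ as a Feynman--Kac model of exactly the same product form as $\Gamma_t$ in \eqref{eq:feynman-kac-model}, but with the exact marginal transition $p_\phi(z_s \mid z_{0:s-1})$ and the exact potential $g_s(z_{0:s})$ replaced by their finite-$M$ surrogates built from the inner jittered particle filter, and then to show that each surrogate converges to its exact counterpart as $M \to \infty$. Writing $\hat{p}^M(\theta \mid z_{0:s-1})$ for the inner empirical filtering distribution carried along a generic outer trajectory, the surrogate transition is $p_\phi^M(z_s \mid z_{0:s-1}) = \intt p_\phi(z_s \mid z_{0:s-1}, \theta)\, \hat{p}^M(\theta \mid z_{0:s-1})\,\dd\theta$, and the surrogate potential is $g_s^M = \exp\{\eta\, r_s^M\}$ with $r_s^M$ the plug-in version of $r_s$ obtained by integrating the likelihood $f$ against $\hat{p}^M$ instead of against the exact posterior. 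Establishing the proposition then reduces to two ingredients: (i) consistency of the inner filter, and (ii) propagation of that consistency through the finite product defining the Feynman--Kac marginal.

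For ingredient (i), I would import the $L_p$ error bounds of \citet{crisan2018nested} for the inner jittered particle filter. Their analysis proceeds by induction on $s$ and controls, at each step, the perturbation introduced by the jittering kernel $\kappa_M$ by scaling its bandwidth with $M$; under their assumptions this yields, for every bounded $\psi$, a bound of the form $\mathbb{E}\bigl[\,|\intt \psi\,\dd\hat{p}^M(\cdot \mid z_{0:s}) - \intt \psi\,\dd p(\cdot \mid z_{0:s})|^p\,\bigr]^{1/p} \le c_s\,\|\psi\|_\infty / \sqrt{M}$, and in particular almost sure (or $L_p$) convergence to the exact filter. Since the policy factor $\pi_\phi(\xi_{s-1} \mid z_{0:s-1})$ is $\theta$-free, both the surrogate transition and the integral inside $r_s$ are recovered by applying this bound with the single bounded integrand $\psi(\theta) = f(x_s \mid x_{s-1}, \xi_{s-1}, \theta)$, giving $p_\phi^M(z_s \mid z_{0:s-1}) \to p_\phi(z_s \mid z_{0:s-1})$ and $r_s^M \to r_s$. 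The boundedness and strict-positivity conditions relegated to the appendix enter precisely here: they license the bounded-test-function bound and let the convergence pass through the $\log$ and $\exp$ in $g_s^M = \exp\{\eta r_s^M\}$ by continuous mapping, so that $g_s^M \to g_s$.

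For ingredient (ii), because the Feynman--Kac marginal is a product over the finite horizon $s = 1, \dots, t$, factor-wise convergence of $p_\phi^M$ and $g_s^M$ immediately gives convergence of the unnormalized density $p(z_0)\prodd_{s=1}^t p_\phi^M(z_s \mid z_{0:s-1})\, g_s^M(z_{0:s})$ to its exact analogue. Integrating this out, and invoking uniform integrability guaranteed by the boundedness conditions, yields $Z_t^M \to Z_t(\phi)$ with $Z_t^M$ bounded away from zero; combining numerator and denominator then gives weak convergence $\Gamma_t^M \Rightarrow \Gamma_t$, from which \eqref{eq:consistency} follows for every bounded $h$ either directly by the definition of weak convergence or by dominated convergence applied to the explicit densities.

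The main obstacle will be ingredient (i). The inner filter targets a posterior $p(\theta \mid z_{0:s})$ that itself depends on the outer trajectory, and the jittering kernel destroys the exactness one would get from a standard particle filter, so textbook SMC consistency cannot be quoted directly. The delicate point is to control the \emph{accumulated} jittering perturbation uniformly enough across the recursion in $s$ so that, after scaling the kernel bandwidth with $M$, the induced bias vanishes in the limit. This is exactly the content adapted from \citet{crisan2018nested}, and the appendix assumptions are chosen so that their inductive argument carries over to the present nested setting with the nonlinear, log-then-exponentiated potential $g_s$.
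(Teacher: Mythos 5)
Your proposal follows essentially the same route as the paper: the appendix first establishes the $L^p$ bound $\norm{\mu_t^M h - \mu_t h}_p \leq c_t \norm{h}_\infty/\sqrt{M}$ for the inner jittered filter by adapting \citet{crisan2018nested} (your ingredient (i), including the split into jittering error and reweight/resample error), upgrades it to almost sure convergence via Borel--Cantelli, and then propagates this through the finite-horizon Feynman--Kac product exactly as in your ingredient (ii) --- a step the paper handles by citing Proposition 2 of \citet{iqbal2024nesting} rather than spelling it out. Your identification of where the positivity/boundedness assumptions enter (bounding the test function $f$ and passing convergence through the $\log$ and $\exp$ in $g_s$) matches the role of Assumptions~\ref{ass:strong-mixing} and~\ref{ass:bounded-func}.
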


\subsection{Backward Sampling}\label{sec:backward-sampling}

The trajectories $z_{0:T}^n$ generated by tracing particle genealogies are valid samples from $\Gamma_T$. However, for $T \gg 1$, very few unique paths for the first components of the trajectories $z_{0:T}^n$ may remain, a problem due to resampling and known as path degeneracy. The backward sampling algorithm~\citep{godsill2004montecarlo} is one way to circumvent the path degeneracy problem by simulating smoothing trajectories backward in time.

Before we explain the algorithm, we first note that at time $t$, one ``particle'' of the IO--NPF is the set of random variables $y_t^n \coloneq \bigl\{ z_t^n, \{ \theta_t^{nm}, B_t^{nm} \}_{1 \leq m \leq M} \bigr\}$, where $B_t^{nm}$ is the ancestor index for $\theta_t^{nm}$. 
Given a partial backward trajectory $\bar{y}_{t+1:T}$ which we have already simulated and partial forward trajectories $(y_{0:t}^n)_{n=1}^N$, backward sampling selects $\bar{y}_{t}$ from $(y_{t}^n)_{n=1}^N$ with probability proportional to
\begin{equation}
    W(y_{0:t}^n, \bar{y}_{t+1:T}) \, W_{z,t}^n \coloneq \tfrac{\displaystyle \Gamma_T^M(y_{0:t}^n, \bar{y}_{t+1:T})}{\displaystyle \Gamma_t^M(y_{0:t}^n)} \, W_{z,t}^n, \qquad \text{for } t \geq 0,
\end{equation}
where $\Gamma_T^M$ is the true target distribution of the IO--NPF~(see Appendix~\ref{app:target-distribution}) and $W_{z,t}^n$ are the filtering weights. This ratio can be computed by considering the transition densities of its constituent steps in Algorithm~\ref{alg:io-npf}. 
In practice, implementing this naively would incur two drawbacks.

First, while correct, using $W(y_{0:t}^n, \bar{y}_{t+1:T})$ \emph{as is} would result in a degenerate algorithm: this is because the $\theta$--particles $\{\theta_t^{nm}, \theta_{t+1}^{nm}\}$ are unlikely to be compatible pairwise, even when the populations $\{(\theta_t^{nm})_{m=1}^M, (\theta_{t+1}^{nm}))_{m=1}^M\}$ are.
In Appendix~\ref{app:backward-sampling} we show how this can be solved by marginalizing the weight over the ancestry of the $\theta$--particles.

Second, the overall cost of backward sampling would be $\mathcal{O}(T^2 N^2 M^2)$. 
This is prohibitive, and we prefer a cheaper `sparse' $\mathcal{O}(N)$ alternative consisting of sampling from a Markov kernel (again!) that keeps the distribution of the ancestors invariant~\citep{bunch2013improved, dau2023backward}. This procedure is detailed in Appendix~\ref{app:backward-sampling}.

\section{Numerical Validation}

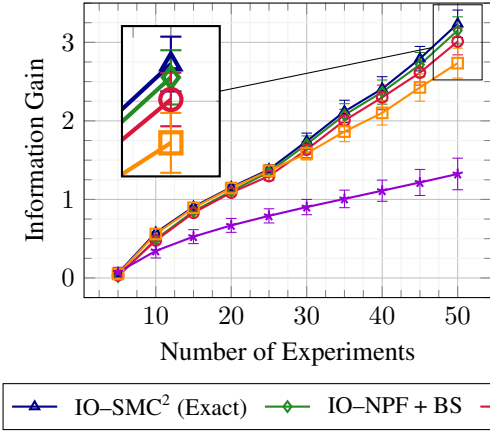
\begin{figure}[t]
    \centering
    \floatbox[{\capbeside\thisfloatsetup{capbesideposition={right,top}, capbesidewidth=6cm}}]{figure}[\FBwidth]
    {\hspace{-6.5cm}\caption{Accumulation of the \emph{realized} information gain computed in closed form for different policies on the conditionally linear stochastic pendulum with a Gaussian parameter prior. We report the mean and standard deviation over $1024$ realizations. Here IO--SMC\textsuperscript{2} (Exact)~\ref{leg:io-smc2-exact} represents an ideal baseline where $\theta$--posterior updates are closed-form, which is only possible in conjugate settings. Otherwise, our IO--NPF with backward sampling (BS)~\ref{leg:io-npf-bs} outperforms all alternatives.}\label{fig:info-gain-linear-pendulum}}
    {\begin{tikzpicture}[spy using outlines=
	{lens={scale=3}, size=2cm, magnification=2, connect spies}]


\definecolor{my-blue}{RGB}{0, 0, 139}
\definecolor{my-green}{RGB}{34, 139, 34}
\definecolor{my-red}{RGB}{220, 20, 60}
\definecolor{my-orange}{RGB}{255, 140, 0}
\definecolor{my-purple}{RGB}{148, 0, 211}

\begin{axis}[
    width=7cm,
    height=5.5cm,
    grid=both,
    minor tick num=3,
    grid style={line width=.1pt, draw=gray!10},
    major grid style={line width=.1pt, draw=gray!50},
    xlabel=Number of Experiments,
    ylabel=Information Gain,
    legend style={
        at={(1.05, 1)},
        anchor=north west,
        font=\small,
    },
    ymin = -0.25,
    ymax = 3.5,
    legend cell align={left},
    legend style={at={(-0.2,-0.45)}, anchor=south west, column sep=0.15cm},
    legend columns=-1,
]
    \addplot [
        my-blue,
        thick,
        mark=triangle,
        mark size=2,
        error bars/.cd,
            y dir=both,y explicit,
    ] coordinates {
        (5.0, 0.059451133461024125) +- (0.0, 1.3051494896207472e-15)
        (10.0, 0.575783912285578) +- (0.0, 0.008133047262260364)
        (15.0, 0.9051487741707877) +- (0.0, 0.014873020967089482)
        (20.0, 1.155175609791871) +- (0.0, 0.027962685509931067)
        (25.0, 1.3809448050348) +- (0.0, 0.058015291887240875)
        (30.0, 1.7416408535371601) +- (0.0, 0.10258133094744169)
        (35.0, 2.1196853805359086) +- (0.0, 0.14317866075646596)
        (40.0, 2.408571554783005) +- (0.0, 0.15393677786742513)
        (45.0, 2.79218449544183) +- (0.0, 0.15572793369204113)
        (50.0, 3.2368202231390515) +- (0.0, 0.17440511648893928)
    };
    \addlegendentry{IO--SMC\textsuperscript{2} (Exact)}\label{leg:io-smc2-exact}
    \addplot [
        my-green,
        thick,
        mark=diamond,
        mark size=2,
        error bars/.cd,
            y dir=both,y explicit,
    ] coordinates {
        (5.0, 0.018058018657501732) +- (0.0, 5.0331562764629885e-16)
        (10.0, 0.49352997228331313) +- (0.0, 0.02111632454050068)
        (15.0, 0.8478449250945952) +- (0.0, 0.020110032722715308)
        (20.0, 1.1041985285995042) +- (0.0, 0.029466252673512126)
        (25.0, 1.3479296516716255) +- (0.0, 0.06255885244141375)
        (30.0, 1.703264120402872) +- (0.0, 0.10519068290341083)
        (35.0, 2.0756714427704073) +- (0.0, 0.14320699137661738)
        (40.0, 2.366593165098061) +- (0.0, 0.15372994253291128)
        (45.0, 2.715905932558143) +- (0.0, 0.15477128805819962)
        (50.0, 3.1520753179424603) +- (0.0, 0.172631980961741)
    };
    \addlegendentry{IO--NPF + BS} \label{leg:io-npf-bs}
    \addplot [
        my-red,
        thick,
        mark=o,
        mark size=2,
        error bars/.cd,
            y dir=both,y explicit,
    ] coordinates {
        (5.0, 0.034103321557263866) +- (0.0, 3.887679330785205e-16)
        (10.0, 0.47659111839183244) +- (0.0, 0.014126801617862464)
        (15.0, 0.8298280117271919) +- (0.0, 0.018856718748503046)
        (20.0, 1.0858112852908024) +- (0.0, 0.029387570708949335)
        (25.0, 1.2968058930856485) +- (0.0, 0.05721533987622905)
        (30.0, 1.6343758598256448) +- (0.0, 0.10246846829655074)
        (35.0, 2.0043698640528205) +- (0.0, 0.14337315597901165)
        (40.0, 2.2994979233828583) +- (0.0, 0.1566280554524464)
        (45.0, 2.614033868339398) +- (0.0, 0.15867192953345696)
        (50.0, 3.0110412765940873) +- (0.0, 0.17038023874675545)
    };
    \addlegendentry{IO--SMC\textsuperscript{2}}
    \addplot [
        my-orange,
        thick,
        mark=square,
        mark size=2,
        error bars/.cd,
            y dir=both,y explicit,
    ] coordinates {
        (5.0, 0.0508803966537406) +- (0.0, 6.2480560673333645e-16)
        (10.0, 0.559934674019239) +- (0.0, 0.00690490732973196)
        (15.0, 0.8935849875185102) +- (0.0, 0.013733854403188266)
        (20.0, 1.145373563579924) +- (0.0, 0.026480769127917726)
        (25.0, 1.3651926336250158) +- (0.0, 0.04579240630912816)
        (30.0, 1.5855601887375266) +- (0.0, 0.08311816096596147)
        (35.0, 1.8635833501149963) +- (0.0, 0.12772626190697575)
        (40.0, 2.1016419074358503) +- (0.0, 0.15472120920495552)
        (45.0, 2.425553609248232) +- (0.0, 0.17435923799042205)
        (50.0, 2.735160918648789) +- (0.0, 0.19050831795290965)
    };
    \addlegendentry{IO--NPF}
    \addplot [
        my-purple,
        thick,
        mark=star,
        mark size=2,
        error bars/.cd,
            y dir=both,y explicit,
    ] coordinates {
        (5.0, 0.07262843401729542) +- (0.0, 0.06253654934106674)
        (10.0, 0.344451688667562) +- (0.0, 0.09025996898241379)
        (15.0, 0.5261380180344404) +- (0.0, 0.08883080170328822)
        (20.0, 0.6694746924500383) +- (0.0, 0.08878540753438746)
        (25.0, 0.7906710893682055) +- (0.0, 0.08992184656225358)
        (30.0, 0.9025732190735505) +- (0.0, 0.09734409104879663)
        (35.0, 1.0063499912826528) +- (0.0, 0.11150045763893261)
        (40.0, 1.1110865605145346) +- (0.0, 0.13492967638259723)
        (45.0, 1.2157217148756743) +- (0.0, 0.16583516093170433)
        (50.0, 1.3241222141935447) +- (0.0, 0.20068641207287805)
    };
    \addlegendentry{Random}

    \coordinate (spypoint) at (axis cs:50,3);
    \coordinate (magnifyglass) at (axis cs:12,2.25);

\end{axis}

\spy [black, thin, height=2.0cm, width=1.3cm] on (spypoint)
    in node[fill=white] at (magnifyglass);

\end{tikzpicture} 
}
\end{figure}

\begin{wraptable}{r}{0.55\linewidth}
    \begin{adjustbox}{width=0.95\linewidth}
        \vspace{-8pt}
        \caption{EIG, sPCE and runtime for various BED schemes. Mean and standard deviation over $25$ seeds.}
        \label{tab:linear-pendulum}
        \begin{tabular}{lccc}
            \toprule
            Policy & EIG Estimate~\eqref{eq:eig_constant_noise} & sPCE & Runtime $[s]$\\
            \midrule
            Random & $1.37 \pm 0.08$ & $1.44 \pm 0.35$ & -- \\
            iDAD & $2.58 \pm 0.17$ & $2.53 \pm 0.35$ & -- \\
            IO--NPF & $2.98 \pm 0.19$ & $3.12 \pm 0.35$ & $0.34 \pm 0.01$ \\
            IO--SMC\textsuperscript{2} & $3.35 \pm 0.20$ &  $3.44 \pm 0.39$ & $5.71 \pm 0.08$ \\
            IO--NPF + BS & $3.47 \pm 0.18$ & $3.54 \pm 0.40$ & $2.73 \pm 0.07$ \\
            IO--SMC\textsuperscript{2} Exact & $3.54 \pm 0.20$ & $3.64 \pm 0.38$ & $1.36 \pm 0.04$ \\
            \bottomrule
        \end{tabular}
        \vspace{-4pt}
    \end{adjustbox}
\end{wraptable}
To validate our method numerically, we consider the stochastic dynamics of a pendulum in conditionally linear-Gaussian form. The unknown parameters are a combination of its mass and length, while the measurement of the angular position and velocity constitutes the outcome of an experiment. The design is the torque applied to the pendulum at every experiment. We consider a series of $50$ experiments, corresponding to $50$ time steps of pendulum dynamics. The prior on the parameters is Gaussian.

Table~\ref{tab:linear-pendulum} compares the \emph{expected} information gain and its surrogate, the \emph{sequential prior contrastive estimate}~\citep[sPCE,][]{foster2021deep}, for a variety of amortized BED algorithms. The EIG estimate is computed by leveraging our nested sampling scheme to draw samples from the marginal $p_{\phi}(z_{0:T})$ and evaluate objective~\eqref{eq:eig_constant_noise}. The comparison includes policies trained using the IO--NPF both with and without backward sampling (BS), IO--SMC\textsuperscript{2}, implicit deep adaptive design \citep[iDAD,][]{ivanova2021implicit}, and a random policy baseline. Furthermore, Figure~\ref{fig:info-gain-linear-pendulum} shows the \emph{realized} information gain achieved by policies trained using different algorithms over the number of experiments. In this particular conjugate setting, the $\theta$--posteriors and the marginal transition densities can be computed in closed form, making the exact variant of IO--SMC\textsuperscript{2} — which substitutes the inner particle filter with exact posterior computation — an ideal baseline. The results, both in Table~\ref{tab:linear-pendulum} and Figure~\ref{fig:info-gain-linear-pendulum}, highlight the benefit of backward sampling: IO--NPF \emph{without} backward sampling performs worse than IO--SMC\textsuperscript{2}, whereas IO--NPF \emph{with} backward sampling outperforms it. Finally, Table~\ref{tab:linear-pendulum} presents the runtime statistics of a single amortization iteration of the various inside--out particle filtering algorithms, emphasizing IO--NPF with backward sampling as the most favorable option, that offers the best balance between task performance, computational efficiency, and generality. For more details on this evaluation consult Appendix~\ref{app:num-eval}. A public implementation in the Julia Programming Language is available at \url{https://github.com/Sahel13/InsideOutNPF.jl}.

\section{Outlook and future work}
We strongly believe that this new, fully recursive perspective on amortized Bayesian experimental design offers a promising, practical, extensible, and well-behaved approach both statistically and empirically. The main limitation of our approach (and of IO--SMC\textsuperscript{2}) is the need to know the Markovian outcome-likelihood, which may not always be available. We hope to address this in future research.
Additionally, we aim to derive stronger, non-asymptotic bounds for the error of the IO--NPF, similar to~\citet{miguez2013convergence}, in particular by incorporating recent results on backward sampling stability~\citep{dau2023backward}, providing stronger guarantees on the learned policies.

\section{Individual Contributions}
AC and HA initially conceived the idea for this article, with AC and SI then developing the methodology. SI proved the consistency of the method, following AC's suggestions. The code implementation and experiments were carried out jointly by HA and SI. SI took the lead in writing the article, with valuable help from AC and HA. SPV and SS provided helpful discussions and feedback on the manuscript. All authors reviewed and validated the final version of the manuscript.

\printbibliography

@article{crisan2017uniform,
    title        = {Uniform convergence over time of a nested particle filtering scheme for recursive parameter estimation in state-space {Markov} models},
    author       = {Dan Crisan and Joaqu{\'i}n M{\'i}guez},
    year         = {2017},
    journal      = {Advances in Applied Probability},
    pages        = {1170--1200},
    volume       = {49},
    number       = {4}
}

@article{crisan2018nested,
    title        = {Nested particle filters for online parameter estimation in discrete-time state-space {Markov} models},
    author       = {Dan Crisan and Joaqu{\'i}n M{\'i}guez},
    year         = {2018},
    journal      = {Bernoulli},
    pages        = {3039 -- 3086},
    volume       = {24},
    number       = {4a}
}

@inproceedings{iqbal2024nesting,
    title        = {Nesting particle filters for experimental design in dynamical systems},
    author       = {Sahel Iqbal and Adrien Corenflos and Simo S{\"a}rkk{\"a} and Hany Abdulsamad},
    year         = {2024},
    booktitle    = {International Conference on Machine Learning}
}

@article{kitagawa1996montecarlo,
    title        = {Monte {Carlo} filter and smoother for non-{Gaussian} nonlinear state-space models},
    author       = {Genshiro Kitagawa},
    year         = {1996},
    journal      = {Journal of Computational and Graphical Statistics},
    pages        = {1--25},
    volume       = {5},
    number       = {1}
}

@incollection{crisan2001particle,
    title        = {Particle filters -- {A} theoretical perspective},
    author       = {Dan Crisan},
    year         = {2001},
    booktitle    = {Sequential {Monte Carlo} Methods in Practice},
    publisher    = {Springer},
    editor       = {Arnaud Doucet and Nando Freitas and Neil Gordon}
}

@article{miguez2013convergence,
    title        = {On the convergence of two sequential {Monte Carlo} methods for maximum a posteriori sequence estimation and stochastic global optimization},
    author       = {Joaqu\'in M\'iguez and Dan Crisan and Petar M. Djuri\'c},
    year         = {2013},
    journal      = {Statistics and Computing},
    pages        = {91--107},
    volume       = {23}
}

@article{lindley1956measure,
    title        = {On a measure of the information provided by an experiment},
    author       = {D. V. Lindley},
    year         = {1956},
    journal      = {The Annals of Mathematical Statistics},
    pages        = {986--1005},
    volume       = {27},
    number       = {4}
}

@article{chaloner1995bayesian,
    title        = {{Bayesian} experimental design: {A} review},
    author       = {Kathryn Chaloner and Isabella Verdinelli},
    year         = {1995},
    journal      = {Statistical Science},
    pages        = {273--304},
    volume       = {10},
    number       = {3}
}

@inproceedings{toussaint2006probabilistic,
    title        = {Probabilistic inference for solving discrete and continuous state {M}arkov decision processes},
    author       = {Toussaint, Marc and Storkey, Amos},
    year         = {2006},
    booktitle    = {International Conference on Machine Learning},
    pages        = {945--952}
}

@inproceedings{naesseth2020markovian,
    title        = {Markovian score climbing: {Variational} inference with {KL}({P}{\textbar}{\textbar}{Q})},
    author       = {Naesseth, Christian and Lindsten, Fredrik and Blei, David},
    year         = {2020},
    booktitle    = {Advances in Neural Information Processing Systems},
    pages        = {15499--15510},
    volume       = {33}
}

@article{huan2016sequential,
    title        = {Sequential {Bayesian} optimal experimental design via approximate dynamic programming},
    author       = {Xun Huan and Youssef M. Marzouk},
    year         = {2016},
    journal      = {arXiv preprint arXiv:1604.08320}
}

@inproceedings{foster2021deep,
    title        = {Deep adaptive design: {A}mortizing sequential {Bayesian} experimental design},
    author       = {Foster, Adam and Ivanova, Desi R and Malik, Ilyas and Rainforth, Tom},
    year         = {2021},
    booktitle    = {International Conference on Machine Learning}
}

@inproceedings{ivanova2021implicit,
    title        = {Implicit Deep Adaptive Design: {P}olicy–based experimental design without likelihoods},
    author       = {Ivanova, Desi R and Foster, Adam and Kleinegesse, Steven and Gutmann, Michael and Rainforth, Tom},
    year         = {2021},
    booktitle    = {Advances in Neural Information Processing Systems},
    pages        = {25785--25798},
    volume       = {34}
}

@article{dau2023backward,
    title        = {On backward smoothing algorithms},
    author       = {Hai-Dang Dau and Nicolas Chopin},
    year         = {2023},
    journal      = {The Annals of Statistics},
    publisher    = {Institute of Mathematical Statistics},
    pages        = {2145 -- 2169},
    volume       = {51},
    number       = {5}
}

@article{godsill2004montecarlo,
    title        = {{Monte Carlo} smoothing for nonlinear time series},
    author       = {Simon J. Godsill and Arnaud Doucet and Mike West},
    year         = {2004},
    journal      = {Journal of the American Statistical Association},
    pages        = {156--168},
    volume       = {99},
    number       = {465}
}

@book{chopin2020introduction,
    title        = {An Introduction to Sequential {Monte Carlo}},
    author       = {Nicolas Chopin and Omiros Papaspiliopoulos},
    year         = {2020},
    publisher    = {Springer Cham}
}

@inproceedings{kueck2009inference,
    title        = {Inference and learning for active sensing, experimental design and control},
    author       = {Kueck, Hendrik and Hoffman, Matt and Doucet, Arnaud and de Freitas, Nando},
    year         = {2009},
    booktitle    = {Pattern Recognition and Image Analysis},
    publisher    = {Springer Berlin Heidelberg},
    pages        = {1--10},
    editor       = {Araujo, Helder and Mendon{\c{c}}a, Ana Maria and Pinho, Armando J. and Torres, Mar{\'i}a In{\'e}s},
}

@article{rainforth2024modern,
    title        = {Modern {Bayesian} experimental design},
    author       = {Tom Rainforth and Adam Foster and Desi R. Ivanova and Freddie Bickford Smith},
    year         = {2024},
    journal      = {Statistical Science},
    publisher    = {Institute of Mathematical Statistics},
    pages        = {100--114},
    volume       = {39},
    number       = {1}
}

@book{delmoral2004feynman,
    title        = {{Feynman-Kac Formulae: {G}enealogical and Interacting Particle Systems with Applications}},
    author       = {Del Moral, Pierre},
    year         = {2004},
    publisher    = {Springer}
}

@incollection{liu2001combined,
    title        = {Combined parameter and state estimation in simulation-based filtering},
    author       = {Jane Liu and Mike West},
    year         = {2001},
    booktitle    = {Sequential {Monte Carlo} Methods in Practice},
    publisher    = {Springer},
    editor       = {Arnaud Doucet and Nando Freitas and Neil Gordon}
}

@article{bunch2013improved,
    title        = {Improved particle approximations to the joint smoothing distribution using {Markov} chain {Monte Carlo}},
    author       = {Pete Bunch and Simon Godsill},
    year         = {2013},
    journal      = {IEEE Transactions on Signal Processing},
    pages        = {956--963},
    volume       = {61},
    number       = {4}
}

@inproceedings{attias2003planning,
    title        = {Planning by probabilistic inference},
    author       = {Attias, Hagai},
    year         = {2003},
    booktitle    = {International Workshop on Artificial Intelligence and Statistics},
    pages        = {9--16},
    organization = {PMLR}
}

@article{huan2024optimal,
    title        = {Optimal experimental design: {F}ormulations and computations},
    author       = {Huan, Xun and Jagalur, Jayanth and Marzouk, Youssef},
    year         = {2024},
    journal      = {Acta Numerica},
    pages        = {715–-840},
    volume       = {33}
}

@book{sarkka2019applied,
    author = {Simo S{\"a}rkk{\"a} and Arno Solin},
    title = {Applied Stochastic Differential Equations},
    publisher = {Cambridge University Press},
    year = {2019}
}

@article{gu1998stochastic,
    title        = {A stochastic approximation algorithm with {Markov chain Monte-Carlo} method for incomplete data estimation problems},
    author       = {Ming Gao Gu and Fan Hui Kong},
    year         = {1998},
    journal      = {Proceedings of the National Academy of Sciences},
    publisher    = {National Academy of Sciences},
    pages        = {7270--7274},
    volume       = {95},
    number       = {13}
}

@book{cappe2005inference,
    title        = {Inference in Hidden {M}arkov Models},
    author       = {Olivier Capp{\'{e}} and Eric Moulines and Tobias Ryd{\'{e}}n},
    year         = {2005},
    publisher    = {Springer},
    series       = {Springer Series in Statistics}
}

@article{andrieu2010particle,
    title        = {Particle {Markov chain Monte Carlo} methods},
    author       = {Andrieu, Christophe and Doucet, Arnaud and Holenstein, Roman},
    year         = {2010},
    journal      = {Journal of the Royal Statistical Society: {Series B}},
    pages        = {269--342},
    volume       = {72},
    number       = {3}
}

@article{olsson2017Paris,
    title        = {Efficient particle-based online smoothing in general hidden {M}arkov models: The {PaRIS} algorithm},
    author       = {Jimmy Olsson and Johan Westerborn},
    year         = {2017},
    journal      = {Bernoulli},
    publisher    = {[Bernoulli Society for Mathematical Statistics and Probability, International Statistical Institute (ISI)]},
    pages        = {1951--1996},
    volume       = {23},
    number       = {3},
}

@inproceedings{cardoso2023state,
    title        = {State and parameter learning with {PaRIS} particle {G}ibbs},
    author       = {Cardoso, Gabriel and Janati El Idrissi, Yazid and Le Corff, Sylvain and Moulines, Eric and Olsson, Jimmy},
    year         = {2023},
    booktitle    = {International Conference on Machine Learning},
    publisher    = {PMLR},
    pages        = {3625--3675},
    volume       = {202}
}

@article{abdulsamad2023risk,
  title={Risk-sensitive stochastic optimal control as {Rao-Blackwellized Markovian} score climbing}, 
  author={Hany Abdulsamad and Sahel Iqbal and Adrien Corenflos and Simo Särkkä},
  journal = {arXiv preprint arXiv:2312.14000},
  year={2023}
}

@Article{whiteley2010particle,
    author	 = {Nick Whiteley},
    title		 = {Contribution to the discussion on `{P}article {M}arkov
		 chain {M}onte {C}arlo methods' by {A}ndrieu, {C}.,
		  {D}oucet, {A}., and {H}olenstein, {R}},
    journal	 = {Journal of the Royal Statistical Society: Series B
		 (Statistical Methodology)},
    volume	 = {72},
    number	 = {3},
    pages		 = {306--307},
    year		 = {2010}
}

@Article{lindsten2014particle,
    author	 = {Lindsten, Fredrik and Jordan, Michael I. and Schön,
		 Thomas B.},
    journal	 = {The Journal of Machine Learning Research},
    number	 = {1},
    pages		 = {2145--2184},
    title		 = {Particle {G}ibbs with ancestor sampling},
    volume	 = {15},
    year		 = {2014}
}


\setlength{\abovedisplayskip}{3pt}
\setlength{\belowdisplayskip}{3pt}

\newpage
\appendix
\section{Details of the Inside--Out NPF}\label{app:details-algo}
\begin{algorithm}[h]
  \DontPrintSemicolon
  \caption{The Inside--Out NPF algorithm.}
  \label{alg:io-npf}
  \SetKwInput{Input}{input}
  \SetKwInput{Notation}{notation}
  \Notation{Operations involving the superscript $n$ are to be performed for $n = 1, \dots, N$.}
    Sample $z_0^n \sim p(z_0)$, and set $W_{z,0}^n \gets 1/N$.\;
    Sample $\theta_0^{nm} \sim p(\theta)$ and set $W_{\theta,0}^{nm} \gets 1/M$ for $m = 1, \dots, M$.\;
    \For{$t \gets 1$ \KwTo $T$}{
        Sample $A_t^n \sim \mathcal{M}(W_{z,t-1}^{1:N})$.\tcp*{Resample}
        Sample $z_t^n \sim p_\phi^M(z_t \mid z_{0:t-1}^{A_t^n})$.\label{lin:x-imp-sam}\tcp*{Propagate}
        Set $W_{z,t}^n \propto g_t^{M}(z_{0:t}^n)$.\label{lin:x-imp-wei} \label{lin:x-weighting}\tcp*{Reweight}
        \For{$m \gets 1$ \KwTo $M$}{
            Set $W_{\theta,t}^{nm} \propto p(z_t^n \mid z_{0:t-1}^{A_t^n}, \theta_{t-1}^{A_t^n m})$.\tcp*{Reweight}
            Sample $B_t^{nm} \sim \mathcal{M}(W_{\theta,t}^{n1:M})$.\tcp*{Resample}
            Sample $\theta_t^{nm} \sim \kappa_M(\theta_t \mid \theta_{t-1}^{A_t^n B_t^{nm}})$.\tcp*{Jitter}
        }
    }
\end{algorithm}

The Inside--Out NPF is presented in Algorithm~\ref{alg:io-npf}. It is a nested particle filter to sample from the Feynman-Kac model $(\Gamma_t)_{t=0}^T$ given by~\eqref{eq:feynman-kac-model}. At a time step $t$, the algorithm provides weighted trajectories $(W_{z,t}^n, z_{0:t}^n)_{n=1}^N$, with each trajectory computed recursively as $z_{0:t}^n \coloneq (z_{0:t-1}^{A_t^n}, z_t^n)$. Furthermore, for every $n \in \{1, \dots, N\}$, it generates a particle approximation of $p(\theta \mid z_{0:t}^n)$,
\begin{equation}
  p(\theta \mid z_{0:t}^n) \approx \hat{p}(\theta \mid z_{0:t}^n) \coloneq \frac{1}{M} \sum_{m=1}^M \delta_{\theta_t^{nm}}(\theta).
\end{equation}
The marginal transition density for the state $x_t$ is then approximated as
\begin{equation}
    p^M(x_{t+1} \mid z_{0:t}^n, \xi_t) \coloneq \int f(x_{t+1} \mid x_t^n, \xi_t, \theta) \, \hat{p}(\theta \mid z_{0:t}^n) \, \dd\theta = \frac{1}{M} \sum_{m=1}^M f(x_{t+1} \mid x_t^n, \xi_t, \theta_t^{mn}).
\end{equation}
We then define
\begin{align}
    p_\phi^M(z_{t+1} \mid z_{0:t}^n, \xi_t) &\coloneq p^M(x_{t+1} \mid z_{0:t}^n, \xi_t) \, \pi_\phi(\xi_t \mid z_{0:t}^n), \\
    g_t^M(z_{0:t+1}^n) &\coloneq \exp\left\{ -\eta \log p^M(x_{t+1}^n \mid z_{0:t}^{A_t^n}, \xi_t) \right\}.
\end{align}
These approximations are used for the importance sampling and weighting steps~(lines \ref{lin:x-imp-sam} and \ref{lin:x-imp-wei}) in Algorithm~\ref{alg:io-npf}. Finally, $\mathcal{M}(W^{1:N})$ refers to the multinomial distribution over the integer set $\{1, \dots, N\}$ with corresponding weights $\{W^1, \dots, W^N\}$.

\subsection{Target distribution of the IO--NPF}
\label{app:target-distribution}
We follow a similar analysis to that in~\citet[Section 4.3]{iqbal2024nesting}. We drop the $n$ indices and the explicit dependence on $\phi$. A "particle" of the IO--NPF at time $t$ is the random variable $(z_t, \theta_t^{1:M}, B_t^{1:M})$, with $B_0^{1:M} \coloneq \Phi$ being the null set. At $t = 0$, they have a density
\begin{equation}\label{eq:t0-density}
  \Gamma_0^{M}(z_0, \theta_0^{1:M}) = p(z_0) \prodd_{m=1}^{M} p(\theta_0^m).
\end{equation}
For $t \geq 1$, the ratio of successive target densities can be broken down as
\begin{equation}\label{eq:target-density-breakdown}
  \frac{\Gamma_{t+1}^{M}(z_{0:t+1}, \theta_{0:t+1}^{1:M}, B_{1:t+1}^{1:M})}{\Gamma_t^{M}(z_{0:t}, \theta_{0:t}^{1:M}, B_{1:t}^{1:M})} = \frac{\Gamma_{t+1}^{M}(z_{0:t+1}, \theta_{0:t}^{1:M}, B_{1:t}^{1:M})}{\Gamma_t^{M}(z_{0:t}, \theta_{0:t}^{1:M}, B_{1:t}^{1:M})} \times \frac{\Gamma_{t+1}^{M}(z_{0:t+1}, \theta_{0:t+1}^{1:M}, B_{1:t+1}^{1:M})}{\Gamma_{t+1}^{M}(z_{0:t+1}, \theta_{0:t}^{1:M}, B_{1:t}^{1:M})}.
\end{equation}
The second fraction in~\eqref{eq:target-density-breakdown} corresponds to the resample and jitter steps in the inner particle filter:
\begin{equation}\label{eq:theta-transition-density}
  \frac{\Gamma_{t+1}^{M}(z_{0:t+1}, \theta_{0:t+1}^{1:M}, B_{1:t+1}^{1:M})}{\Gamma_{t+1}^{M}(z_{0:t+1}, \theta_{0:t}^{1:M}, B_{1:t}^{1:M})} = \prodd_{m=1}^{M} W_{\theta, t+1}^{B_{t+1}^m} \, \kappa_M(\theta_{t+1}^m \mid \theta_t^{B_{t+1}^m}),
\end{equation}
while the first fraction in~\eqref{eq:target-density-breakdown} corresponds to the propagate and reweight steps of the outer particle filter,
\begin{equation}\label{eq:outer-transition-density}
  \frac{\Gamma_{t+1}^{M}(z_{0:t+1}, \theta_{0:t}^{1:M}, B_{1:t}^{1:M})}{\Gamma_t^{M}(z_{0:t}, \theta_{0:t}^{1:M}, B_{1:t}^{1:M})} \propto 
  \begin{aligned}[t]
      &\left\{ \frac{1}{M} \summ_{m=1}^{M} p(z_{t+1} \mid z_{0:t}, \theta_t^m) \right\} \\ &\times \exp\left\{ -\eta \log \left( \frac{1}{M} \summ_{m=1}^{M} f(x_{t+1} \mid x_t, \xi_t, \theta_t^m) \right) \right\}.
  \end{aligned}
\end{equation}

\subsection{Backward sampling}
\label{app:backward-sampling}

In order to reduce degeneracy of particle smoothing (and conditional SMC), a common strategy is to employ backward or ancestor sampling~\citep{whiteley2010particle,lindsten2014particle}. This typically means that, instead of tracing back ancestors~\cite [as is done in][]{iqbal2024nesting}, we can ask the question ``what ancestor could have resulted in this particle?'' and simulate accordingly.
This is often done using direct simulation as per \citet{godsill2004montecarlo}, costing $\mathcal{O}(N)$.

An issue with this is that its cost becomes prohibitive as soon as 
\begin{enumerate}
    \item we wish to compute expectations over the CSMC chain, whereby we may want to obtain more than a single trajectory at each iteration, and the cost is $\mathcal{O}(N)$ \emph{per desired sample},
    \item the cost of evaluating the probability of a particle having a given ancestor is high.
\end{enumerate}
For these two reasons, the first one coming from the Rao--Blackwellization already proposed in~\citet{iqbal2024nesting} but with ancestor tracing, the second due to reasons explained later in this section, we prefer a `sparse' version of backward sampling, where we do not need to evaluate all possible combinations, which we now present.

\begin{algorithm}[t]
  \DontPrintSemicolon
  \caption{MCMC backward sampler with independent Metropolis-Hastings~\citep{bunch2013improved}.}
  \label{alg:mcmc-backward-sampler}
  \SetKwInput{Input}{input}
  \SetKwInput{Output}{output}
  \SetKwInput{Notation}{notation}
  \Input{All random variables generated by Algorithm~\ref{alg:io-npf}.}
  \Output{A trajectory $\bar{y}_{0:T}$.}
  \Notation{Operations involving the superscripts $n$ are to be performed for $n = 1, \dots, N$.}
  Sample $I_T \sim \mathcal{M}(W_{z,T}^{1:N})$.\;
  Set $\bar{y}_T \gets (z_T^{I_T}, \theta_T^{I_T1:M})$.\;
  \For{$t \gets T-1$ \KwTo $0$}{
    Sample $I_t \sim \mathcal{M}(W_{z,t}^{1:N})$.\;
    Accept $I_t$ with probability
        $
            \min \left\{1, W\Bigl(y_{0:t}^{I_t}, \bar{y}_{t+1:T}\Bigr) / W\Bigl(y_{0:t}^{A_{t+1}^{I_{t+1}}}, \bar{y}_{t+1:T}\Bigr) \right\}.
        $\;
    If rejected, set $I_t \gets A_{t+1}^{I_{t+1}}$. \tcp*{i.e., fall back to ancestor tracing}
    Set $\bar{y}_t \gets (z_t^{I_t}, \theta_t^{I_t1:M})$.\;
  }
\end{algorithm}

In this section, we describe the MCMC backward sampler from \citet{bunch2013improved,dau2023backward}, presented in Algorithm~\ref{alg:mcmc-backward-sampler}, and how we adapt it for the IO--NPF.
For notational convenience, let us define $y_t^n \coloneq \bigl( z_t^n, \theta_t^{n1:M}, B_t^{n1:M} \bigr)$. As mentioned in the previous section, this $y_t^n$ denotes one ``particle'' of our nested particle filter. At time $t$, we will have already sampled backward indices $I_{t+1:T}$. Denoting $\bar{y}_t = y_t^{I_t}$ for all $t \in \{0, \dots, T\}$, we start from the true ancestor of $\bar{y}_{t+1}$, $I_t = A_{t+1}^{I_{t+1}}$, and apply an independent Metropolis-Hastings step targeting the invariant density
\begin{equation}
    \Gamma_T^M(i \mid y_0^{1:N}, \dots, y_T^{1:N}, A_1^{1:N}, \dots, A_T^{1:N}, I_{t+1:T}) \propto W_{z,t}^{i} W(y_{0:t}^i, \bar{y}_{t+1:T}),
\end{equation}
where $W_{z,t}^{1:N}$ are the filtering weights~(line~\ref{lin:x-weighting} in Algorithm~\ref{alg:io-npf}), and $W(y_{0:t}^n, \bar{y}_{t+1:T})$ is defined as
\begin{align}
  W(y_{0:t}^n, \bar{y}_{t+1:T}) &\coloneq \frac{\Gamma_T^{M}(y_{0:t}^n, \bar{y}_{t+1:T})}{\Gamma_t^{M}(y_{0:t}^n)} \\
  &= \prod_{s=t+1}^{T} \Gamma_s^{M}(\bar{y}_s \mid y_{0:t}^n, \bar{y}_{t+1:s-1}) \\
  &= \prod_{s=t+1}^T \Gamma_s^{M}(\bar{z}_s \mid y_{0:t}^n, \bar{y}_{t+1:s-1}) \, \Gamma_s^{M}(\bar{\theta}_s^{1:M}, \bar{B}_s^{1:M} \mid \bar{z}_s, y_{0:t}^n, \bar{y}_{t+1:s-1}). \label{eq:bs-weights}
\end{align}
The first term is the ratio in~\eqref{eq:outer-transition-density}, corresponding to the importance sampling and weighting steps in the outer particle filter. The second term in~\eqref{eq:bs-weights} is the transition density of the particles in the inner particle filter from~\eqref{eq:theta-transition-density}.
When using this cheaper formulation of backward sampling, we can obtain $N$ trajectories at the fixed cost of $\mathcal{O}(N)$, rather than $\mathcal{O}(N^2)$ as given by the direct approach of~\citet{godsill2004montecarlo}.

While computing $W(y_{0:t}^n)$ following \eqref{eq:bs-weights} would result in a valid algorithm, it would likely be degenerate. Indeed, the second term of the product, given by \eqref{eq:theta-transition-density} would result in very low transition probabilities. This is because, even if the particles may represent the same posterior distribution, their pairwise alignment may be arbitrarily poor. Thankfully, it is possible to integrate the distribution over the weights $B_t^{1:M}$, in effect Rao--Blackwellizing the weights $W(y^n_{0:t})$, which we describe next.

Let us omit the $n$ superscripts and consider the transition probability of the $\theta$ particles,
\begin{equation}
    \Gamma_{t+1}^M(\theta_{t+1}^{1:M} \mid B_{t+1}^{1:M}, \theta_t^{1:M}) = \prod_{m=1}^{M} \kappa_M(\theta_{t+1}^m \mid \theta_t^{B_{t+1}^m}).
\end{equation}
The probabilities above are also conditioned on a specific trajectory $z_{0:t+1}$ and all $\theta$ and $B$ particles from the previous time steps, but we omit these for notational clarity. We now marginalize over the resampling indices $B_{t+1}^{1:M}$ to get
\begin{align}
    \Gamma_{t+1}^M(\theta_{t+1}^{1:M} \mid \theta_t^{1:M}) &= \int \prodd_{m=1}^{M} \kappa_M(\theta_{t+1}^m \mid \theta_t^{B_{t+1}^m}) \, \dd \Gamma_{t+1}^M(B_{t+1}^{1:M}) \\
    &= \prodd_{m=1}^{M} \int \kappa_M(\theta_{t+1}^m \mid \theta_t^{B_{t+1}^m}) \, \dd \Gamma_{t+1}^M(B_{t+1}^{1:M}) \\
    &= \prodd_{m=1}^{M} \left\{ \summ_{k=1}^{M} W_{\theta, t+1}^k \, \kappa_M(\theta_{t+1}^m \mid \theta_t^k) \right\}.
\end{align}
In the second line, we have used the fact when using multinomial resampling, the resampling indices are independent and identically distributed.


\begin{remark}\label{rem:csmc-consistency}
    The use of the MCMC backward sampler of~\citep{bunch2013improved} within CSMC is non-standard, and to the best of our knowledge, it has only been studied within the context of particle smoothing only~\citep{dau2023backward}.
    This is because, in general, it is not very useful there: the cost of sampling once from~\citep{bunch2013improved} is the same as sampling $N$ times: $\mathcal{O}(N)$, which is the same as sampling a single trajectory using standard backward sampling~\citep{godsill2004montecarlo}.
    In particular, it may not directly seem obvious to the reader why the algorithm is correct \emph{at all}. This is due to the fact that the backward sampling step in conditional SMC can be seen as the second step of a form of `Hastings-within-Gibbs' algorithm, where the first, Hastings, step is given by standard the forward pass of CSMC, and the second, Gibbs, step is an exact sampling from the resulting categorical distribution $\mathrm{Cat}\left(W(y_{0:t}^n, \bar{y}_{t+1:T})\right)$ of the ancestors~\citep{whiteley2010particle}. This second part can also be implemented using a Markov kernel keeping the distribution of the ancestors invariant, which is what we do here.
\end{remark}

\section{Proof of Proposition~\ref{prop:consistency}}
\label{app:prop-proof}

\emph{Notation and setup.} We assume that $\theta \in D_\theta$, where $\mathbb{R}^d \supset D_\theta$ is a compact set. $B(\mathbb{R}^d)$ denotes the set of bounded, measurable functions defined on $\mathbb{R}^d$. If $f$ is a $\nu$-integrable function, $\nu f \coloneq \int f \dd \nu$. For a function $h:S \to \mathbb{R}$, the supremum norm is denoted as $\norm{h}_\infty \coloneq \sup_{x \in S}\abs{h(x)}$. The $L^p$ norm of a random variable $Z$ defined on the probability space $(\Omega, \mathcal{F}, \nu)$ is given by
\begin{equation}
  \norm{Z}_p \coloneq \mathbb{E}\bigl[\abs{Z}^p\bigr]^{1/p} = \left[ \int \abs{Z}^p \, \dd \nu \right]^{1/p}.
\end{equation}
We use $\convas$ to denote convergence almost surely and $\convinp$ for convergence in probability.

To prove Proposition~\ref{prop:consistency}, we first bound the error in approximating the filtering distribution of $\theta$ with a particle filter that uses the jittering kernel. Consider the particle filter given in Algorithm~\ref{alg:ibis-replacement}, which corresponds to the inner particle filter in the IO--NPF~(Algorithm~\ref{alg:io-npf}). For a fixed trajectory $z_{0:T}$, at each time step $t$ it yields an empirical measure
\begin{equation}
  \mu_t^{M}(\dd \theta) = \frac{1}{M} \summ_{m=1}^{M} \delta_{\theta_t^m}(\dd \theta)
\end{equation}
that approximates the filtering distribution $\mu_t(\dd \theta) \coloneq p(\theta \mid z_{0:t}) \, \dd \theta$. To bound the error in estimating integrals using $\mu_t^{M}$, we make the following assumptions.
\begin{algorithm}[t]
  \DontPrintSemicolon
  \caption{A particle filter for static models.}
  \label{alg:ibis-replacement}
  \SetKwInput{Input}{input}
  \SetKwInput{Notation}{notation}
  \Notation{Operations involving the superscript $m$ are to be performed for $m = 1, \dots, M$.}
  \Input{A trajectory $z_{0:T}.$}
  $\theta_0^m \sim p(\theta).$\;
  \For{$t \gets 1$ \KwTo $T$}{
    $W_t^m \propto p(z_t \mid z_{0:t-1}, \theta_{t-1}^m).$\;
    $B_t^m \sim \mathcal{M}(W_t^{1:M}).$\;
    $\theta_t^m \sim \kappa_{M}(\dd \theta \mid \theta_{t-1}^{B_t^m}).$\;
  }
\end{algorithm}
\begin{assumption}\label{ass:kernel-bound}
The kernel $\kappa_{M}(\dd \theta \mid \theta'), \theta' \in D_\theta$ satisfies the inequality
\begin{equation}
  \sup_{\theta' \in D_\theta} \int \abs{h(\theta) - h(\theta')} \, \kappa_{M}(\dd \theta \mid \theta') \leq \frac{c_\kappa \norm{h}_\infty}{\sqrt{M}}
\end{equation}
for any $h \in B(D_\theta)$ and some constant $c_\kappa < \infty$.

This assumption bounds the expected absolute difference of integrable functions with respect to the jittering kernel, and is one of the key assumptions made in~\citet{crisan2018nested} for the convergence analysis of the nested particle filter~(NPF).
\end{assumption}

\begin{assumption}\label{ass:strong-mixing}
For all $z_{0:t}$, $t \geq 1$ and $\theta \in D_\theta$, the function $\theta \mapsto p(z_t \mid z_{0:t-1}, \theta)$ is positive and bounded.
\end{assumption}

We now bound the approximation errors in $L^p$ using the following proposition. 

\begin{proposition}\label{prop:inner-pf-lp-error}
  Under Assumptions~\ref{ass:kernel-bound} and~\ref{ass:strong-mixing}, for all $h \in B(D_\theta)$ and $1 \leq p < \infty$, we have
  \begin{equation}\label{eq:parameter-lp-error}
    \norm{\mu_t^{M} h - \mu_t h}_p \leq \frac{c_t \norm{h}_\infty}{\sqrt{M}},
  \end{equation}
  where $c_t < \infty$ is a constant independent of $M$.
\end{proposition}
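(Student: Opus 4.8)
The plan is to prove \eqref{eq:parameter-lp-error} by induction on $t$, following the classical $L^p$ error-propagation argument for particle filters \citep{delmoral2004feynman, crisan2018nested} adapted to the jittered dynamics of Algorithm~\ref{alg:ibis-replacement}. Write $g_t(\theta) \coloneq p(z_t \mid z_{0:t-1}, \theta)$ for the likelihood potential. Since $\theta$ is static, the true filter update is the pure Bayes recursion $\mu_t h = \mu_{t-1}(g_t h)/\mu_{t-1}(g_t)$, with no prediction step. On the particle side I would introduce the two intermediate empirical measures generated within one step: the reweighted measure $\tilde\mu_t^M h \coloneq \sum_m W_t^m h(\theta_{t-1}^m) = \mu_{t-1}^M(g_t h)/\mu_{t-1}^M(g_t)$, and the post-resampling measure $\hat\mu_t^M h \coloneq \frac1M\sum_m h(\theta_{t-1}^{B_t^m})$. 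Then $\mu_t^M h - \mu_t h$ splits, by Minkowski's inequality, into a jitter term $\mu_t^M h - \hat\mu_t^M h$, a resampling term $\hat\mu_t^M h - \tilde\mu_t^M h$, and an update term $\tilde\mu_t^M h - \mu_t h$, each to be bounded by $O(\norm{h}_\infty/\sqrt M)$. The base case $t=0$ is the i.i.d. estimate $\mu_0^M = \frac1M\sum_m \delta_{\theta_0^m}$ with $\theta_0^m \sim p(\theta)$, for which the bound is immediate from the Marcinkiewicz--Zygmund inequality.

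The two sampling terms are controlled by conditioning. Conditioning on the particles present before resampling, the indices $B_t^m$ are conditionally i.i.d. with conditional mean $\tilde\mu_t^M h$, so the conditional Marcinkiewicz--Zygmund inequality yields a uniform bound $\norm{\hat\mu_t^M h - \tilde\mu_t^M h}_p \leq B_p\norm{h}_\infty/\sqrt M$ that survives taking the outer expectation. For the jitter term I would further split it, conditionally on the resampled particles, as $\mu_t^M h - \hat\mu_t^M h = \frac1M\sum_m(h(\theta_t^m) - \bar h_m) + \frac1M\sum_m(\bar h_m - h(\theta_{t-1}^{B_t^m}))$, where $\bar h_m \coloneq \int h\,\kappa_M(\dd\theta\mid\theta_{t-1}^{B_t^m})$. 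The first sum is conditionally mean-zero and is again handled by Marcinkiewicz--Zygmund; the second is the jitter \emph{bias}, bounded termwise by $\abs{\bar h_m - h(\theta_{t-1}^{B_t^m})} \leq \int\abs{h(\theta)-h(\theta_{t-1}^{B_t^m})}\,\kappa_M(\dd\theta\mid\theta_{t-1}^{B_t^m}) \leq c_\kappa\norm{h}_\infty/\sqrt M$ directly by Assumption~\ref{ass:kernel-bound}. This bias term, absent from a standard (non-jittered) particle filter, is exactly where Assumption~\ref{ass:kernel-bound} enters and is the crux of adapting the NPF analysis.

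The update term is where the induction hypothesis is consumed, and it houses the main obstacle. Using the identity $\tilde\mu_t^M h - \mu_t h = \mu_{t-1}^M\bigl(g_t(h - \mu_t h)\bigr)/\mu_{t-1}^M(g_t)$ and noting $\mu_{t-1}\bigl(g_t(h-\mu_t h)\bigr) = 0$, the numerator equals $(\mu_{t-1}^M - \mu_{t-1})\psi$ with $\psi \coloneq g_t(h-\mu_t h)$, a bounded function with $\norm{\psi}_\infty \leq 2\norm{g_t}_\infty\norm{h}_\infty$ by Assumption~\ref{ass:strong-mixing}. The induction hypothesis at $t-1$ then bounds its $L^p$ norm by $2c_{t-1}\norm{g_t}_\infty\norm{h}_\infty/\sqrt M$. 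It remains to control the random denominator $\mu_{t-1}^M(g_t)$: I would use the positivity in Assumption~\ref{ass:strong-mixing} to lower-bound the likelihood by $\underline g_t \coloneq \inf_\theta g_t(\theta) > 0$, giving $\mu_{t-1}^M(g_t) \geq \underline g_t$ deterministically and hence $\norm{\tilde\mu_t^M h - \mu_t h}_p \leq 2c_{t-1}\norm{g_t}_\infty\norm{h}_\infty/(\underline g_t\sqrt M)$. If one reads Assumption~\ref{ass:strong-mixing} as pointwise positivity only, this step needs the mild strengthening that $g_t$ is bounded away from zero, e.g.\ via continuity on the compact $D_\theta$; avoiding any lower bound altogether would instead require routing the argument through the unnormalized flow, which is the standard fallback. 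Summing the three bounds by Minkowski and absorbing $B_p$, $c_\kappa$, $\norm{g_t}_\infty$, $\underline g_t$, and $c_{t-1}$ into a single $c_t < \infty$ independent of $M$ closes the induction and yields \eqref{eq:parameter-lp-error}.
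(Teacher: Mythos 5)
Your proof is correct and follows essentially the same route as the paper: induction on $t$, a Minkowski decomposition of the one-step error into a jittering contribution and a reweight-plus-resample contribution, with Assumption~\ref{ass:kernel-bound} controlling the jitter bias and Assumption~\ref{ass:strong-mixing} controlling the Bayes update. The only real difference is one of granularity: the paper delegates your jitter and resampling/update bounds to cited results (Lemma~3 of Crisan and M\'iguez for the jitter step, Lemma~1 of M\'iguez et al.\ for the weighted-resampling step), whereas you reprove them, which is more self-contained. In doing so you introduce one avoidable wrinkle that you correctly flag yourself: lower-bounding the random denominator by $\inf_\theta g_t(\theta)$ requires $g_t$ to be bounded away from zero, which pointwise positivity on $D_\theta$ does not give without continuity. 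The standard resolution — and the one implicit in the cited lemma — avoids this entirely: write
\begin{equation}
  \tilde\mu_t^M h - \mu_t h = \frac{\mu_{t-1}^M(g_t h) - \mu_{t-1}(g_t h)}{\mu_{t-1}(g_t)} + \frac{\mu_{t-1}^M(g_t h)}{\mu_{t-1}^M(g_t)}\cdot\frac{\mu_{t-1}(g_t) - \mu_{t-1}^M(g_t)}{\mu_{t-1}(g_t)},
\end{equation}
note that the ratio $\mu_{t-1}^M(g_t h)/\mu_{t-1}^M(g_t)$ is a weighted average of $h$ and hence bounded by $\norm{h}_\infty$, and apply the induction hypothesis to the two numerators. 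The resulting constant depends on $1/\mu_{t-1}(g_t)$, which is finite and deterministic since $g_t$ is positive and $\mu_{t-1}$ is a probability measure — so no strengthening of Assumption~\ref{ass:strong-mixing} is needed. With that substitution your argument closes cleanly under exactly the paper's hypotheses.
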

\begin{proof}
We prove this via induction. Since the particles $\theta_0^m \sim \mu_0$ are sampled i.i.d, the result holds for $t = 0$ using the Marcinkiewicz–Zygmund inequality. Let us assume it holds at time $t-1$ for some $t \geq 1$. We define the intermediate measure
  \begin{equation}
    \tilde{\mu}_t^{M}(\dd \theta) \coloneq \frac{1}{M} \summ_{m=1}^{M} \delta_{\theta_{t-1}^{B_t^m}}(\dd \theta),
  \end{equation}
  representing the empirical measure after reweighting and resampling but before jittering. Using Minkowski's inequality, we have
  \begin{equation}\label{eq:static-pf-error}
    \norm{\mu_t^{M} h - \mu_t h}_p \leq \norm{\mu_t^{M}h - \tilde{\mu}_t^{M} h}_p + \norm{\tilde{\mu}_t^{M} h - \mu_t h}_p.
  \end{equation}
  The first term in the RHS of~\eqref{eq:static-pf-error} can be bounded using Lemma 3 from~\citet{crisan2018nested}~(Eqs. 5.7 and 5.9), which relies on Assumption~\ref{ass:kernel-bound}, to get
  \begin{equation}
    \norm{\mu_t^{M}h - \tilde{\mu}_t^{M} h}_p \leq \frac{\tilde{c}_1\,\norm{h}_\infty}{\sqrt{M}},
  \end{equation}
  where $\tilde{c}_1 < \infty$ is independent of $M$. For the second term in the RHS in~\eqref{eq:static-pf-error}, note that $\tilde{\mu}_t^{M}$ is obtained from $\mu_{t-1}^{M}$ through importance weighting and multinomial resampling. Under the assumption of bounded weight functions (Assumption~\ref{ass:strong-mixing}), we have the following inequality from standard particle filter convergence results~\citep[see, for e.g., the proof of Lemma 1 in][]{miguez2013convergence}
  \begin{equation}
    \norm{\tilde{\mu}_t^{M} h - \mu_t h}_p \leq \frac{\tilde{c}_2 \, \norm{h}_\infty}{\sqrt{M}},
  \end{equation}
  where $\tilde{c}_2 < \infty$ is also independent of $M$. Putting it together, we get
  \begin{equation}
    \norm{\mu_t^{M} h - \mu_t h}_p \leq \frac{c_t \norm{h}_\infty}{\sqrt{M}},
  \end{equation}
  with $c_t = \tilde{c}_1 + \tilde{c}_2$.
\end{proof}

\begin{corollary}\label{cor:convas}
  Under the same assumptions as for Proposition~\ref{prop:inner-pf-lp-error}, for all $h \in B(D_\theta)$,
  \begin{equation}
    \mu_t^{M}h \convas \mu_t h.
  \end{equation}
\end{corollary}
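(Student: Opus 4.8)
The plan is to upgrade the $L^p$ error bound of Proposition~\ref{prop:inner-pf-lp-error} to almost sure convergence by a Borel--Cantelli argument. The feature that makes this work cleanly is that the bound~\eqref{eq:parameter-lp-error} holds \emph{for every} $1 \leq p < \infty$ with a finite constant $c_t$, so I am free to choose $p$ large enough that the resulting tail probabilities are summable in $M$.

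Concretely, I would first fix any $p > 2$ (say $p = 4$). Then, for an arbitrary $\epsilon > 0$, Markov's inequality applied to the nonnegative random variable $\abs{\mu_t^{M} h - \mu_t h}^p$ yields
\begin{equation}
  \mathbb{P}\bigl(\abs{\mu_t^{M} h - \mu_t h} > \epsilon\bigr) \leq \frac{\mathbb{E}\bigl[\abs{\mu_t^{M} h - \mu_t h}^p\bigr]}{\epsilon^p} = \frac{\norm{\mu_t^{M} h - \mu_t h}_p^p}{\epsilon^p} \leq \frac{c_t^p \, \norm{h}_\infty^p}{\epsilon^p \, M^{p/2}},
\end{equation}
where the final inequality is exactly~\eqref{eq:parameter-lp-error} raised to the power $p$.

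Since $p/2 > 1$, the series $\sum_{M \geq 1} M^{-p/2}$ converges, and hence $\sum_{M \geq 1} \mathbb{P}\bigl(\abs{\mu_t^{M} h - \mu_t h} > \epsilon\bigr) < \infty$. The first Borel--Cantelli lemma then guarantees that, almost surely, the event $\{\abs{\mu_t^{M} h - \mu_t h} > \epsilon\}$ holds for only finitely many $M$. To remove the dependence on $\epsilon$, I would take the countable sequence $\epsilon_k = 1/k$ and intersect the associated probability-one events; on this intersection $\limsup_M \abs{\mu_t^{M} h - \mu_t h} \leq \epsilon_k$ for every $k$, forcing $\mu_t^{M} h \convas \mu_t h$.

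There is no serious obstacle here; the only point worth stressing is that the argument would fail if~\eqref{eq:parameter-lp-error} were available only for $p \leq 2$, because then $\sum_M M^{-p/2}$ would diverge and the direct Borel--Cantelli step could not be applied along the whole sequence. It is precisely the uniform-in-$p$ nature of Proposition~\ref{prop:inner-pf-lp-error} that makes the passage from $L^p$ convergence to almost sure convergence immediate.
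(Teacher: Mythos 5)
Your argument is correct and follows essentially the same route as the paper: apply the $L^p$ bound with some $p>2$, use Markov/Chebyshev to obtain summable tail probabilities, and conclude via Borel--Cantelli. The only (welcome) refinement is your explicit intersection over the countable sequence $\epsilon_k = 1/k$, which the paper leaves implicit.
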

\begin{proof}
    We follow the proof technique of \citet{crisan2001particle}, see also \citet[Section 11.1]{chopin2020introduction}. We start with Equation~\eqref{eq:parameter-lp-error},
    \begin{equation}
        \norm{\mu_t^{M} h - \mu_t h}_p \leq \frac{c_t \norm{h}_\infty}{\sqrt{M}} \eqcolon \frac{C}{\sqrt{M}}, \quad \forall p \in [1, \infty).
    \end{equation}
    Using Chebyshev's inequality, for all $\epsilon > 0$ and $1 \leq p < \infty$, we have
    \begin{equation}
        \mathbb{P}\bigl[\abs{\mu_t^{M}h - \mu_t h} \geq \epsilon\bigr] \leq \frac{\mathbb{E}\bigl[\abs{\mu_t^{M}h - \mu_t h}^p\bigr]}{\epsilon^p} \leq \frac{C^p}{M^{p/2} \epsilon^p}.
    \end{equation}
    For $p > 2$, the following sum converges:
    \begin{equation}
        \summ_{M = 1}^\infty \mathbb{P}\bigl[\abs{\mu_t^{M}h - \mu_t h} \geq \epsilon\bigr] \leq \summ_{M = 1}^\infty \frac{C^p}{M^{p/2} \epsilon^p} < \infty, \quad \forall p \in (2, \infty).
    \end{equation}
    Thus, using the Borel--Cantelli lemma, the event $\abs{\mu_t^{M}h - \mu_t h} \geq \epsilon$ occurs for only finitely many $M$, almost surely. Since this holds for any $\epsilon > 0$, we have the result
    \begin{equation}
        \mu_t^{M}h \convas \mu_t h.
    \end{equation}
\end{proof}

Corollary~\ref{cor:convas} establishes that expectations under the empirical measures generated by the inner particle filter converge almost surely to the corresponding expectations under the true filtering distribution of $\theta$. We use this, in conjunction with Assumptions~\ref{ass:strong-mixing} and~\ref{ass:bounded-func}, to prove Proposition~\ref{prop:consistency}.

\begin{assumption}[]\label{ass:bounded-func}
    For all $z_{0:t+1}, t\geq 1$ and $\theta \in D_\theta$, the function $\theta \mapsto f(x_{t+1} \mid x_t, \xi_t, \theta)$ is positive and bounded.
\end{assumption}

\begin{proposition}[Consistency]
  Let $\Gamma_t^M(z_{0:t})$ denote the marginal target distribution of Algorithm~\ref{alg:io-npf}. Under Assumptions~\ref{ass:strong-mixing} and~\ref{ass:bounded-func}, for all $h \in B(\mathbb{R}^{d_x (t+1)})$, we have
  \begin{equation}\label{eq:consistency-rep}
    \lim_{M \to \infty} \Gamma_t^M h = \Gamma_t h.
  \end{equation}
\end{proposition}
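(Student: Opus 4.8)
The strategy is to reduce the claim to the inner-filter convergence already proved in Corollary~\ref{cor:convas} and then propagate it through the finitely many steps of the Feynman--Kac recursion~\eqref{eq:feynman-kac-model} by induction on $t$. First I would write both sides as ratios of unnormalized path integrals. Let $\gamma_t(z_{0:t}) = p(z_0)\prod_{s=1}^{t} p_\phi(z_s \mid z_{0:s-1})\, g_s(z_{0:s})$ be the unnormalized measure, so that $\Gamma_t h = \gamma_t h / \gamma_t 1$ with $\gamma_t 1 = Z_t > 0$, and let $\gamma_t^M$ be the object obtained when every integral against the $\theta$-posterior $\mu_{s-1} = p(\theta \mid z_{0:s-1})$ is replaced by the inner-filter approximation $\mu_{s-1}^M$. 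The only quantities in which $\theta$ is marginalized out are the one-step predictive density $\mu_{s-1} f = p(x_s \mid z_{0:s-1}, \xi_{s-1})$ and the potential $g_s = (\mu_{s-1} f)^{-\eta}$; in $\gamma_t^M$ these become $\mu_{s-1}^M f$ and $(\mu_{s-1}^M f)^{-\eta}$. The base case $t=0$ is immediate, since the $z_0$-marginal of both targets is $p(z_0)$.

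Second, I would establish pointwise (in the path $z_{0:t}$) convergence of the per-step multipliers. By Assumption~\ref{ass:bounded-func} the map $\theta \mapsto f(x_s \mid x_{s-1},\xi_{s-1},\theta)$ is positive and bounded, hence $\mu_{s-1} f \in (0, \norm{f}_\infty]$ and in particular is bounded away from $0$ at the limit. Corollary~\ref{cor:convas} gives $\mu_{s-1}^M f \convas \mu_{s-1} f$, so the continuous-mapping theorem applied to $u \mapsto u^{-\eta}$ (continuous at the strictly positive limit) and to the policy factor $\pi_\phi$ yields $p_\phi^M(z_s \mid z_{0:s-1}) \convas p_\phi(z_s \mid z_{0:s-1})$ and $g_s^M(z_{0:s}) \convas g_s(z_{0:s})$. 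Taking the finite product over $s = 1,\dots,t$ gives $\gamma_t^M(z_{0:t}) \convas \gamma_t(z_{0:t})$ for each fixed path.

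The final and hardest step is to pass from this pointwise convergence of integrands to convergence of the integrals $\gamma_t^M h \to \gamma_t h$ for bounded $h$, and then to divide by $\gamma_t^M 1 \to \gamma_t 1 = Z_t > 0$ to obtain~\eqref{eq:consistency-rep}. I expect the main obstacle here to be domination: the potential $g_s^M = (\mu_{s-1}^M f)^{-\eta}$ is a negative power of a density that Assumptions~\ref{ass:strong-mixing}--\ref{ass:bounded-func} bound only from above (which makes $g_s^M \geq \norm{f}_\infty^{-\eta} > 0$ but leaves it unbounded above), so a dominating function for a bounded-convergence argument is not available for free. Equivalently, the normalizing constant $\gamma_t^M 1$ (the case $h \equiv 1$) cannot be handled by the induction hypothesis alone, which controls only bounded test functions. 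I would close this gap by a uniform-integrability / $L^p$ argument in the spirit of Proposition~\ref{prop:inner-pf-lp-error}: control the lower tail of $\mu_{s-1}^M f$ using the rate $\norm{\mu_{s-1}^M f - \mu_{s-1} f}_p \leq c\,\norm{f}_\infty / \sqrt{M}$ to show the family $\{g_s^M\}_M$ is uniformly integrable, thereby upgrading the almost-sure convergence of the integrands to convergence of their (normalizing-constant and $h$-weighted) integrals. A secondary technical point is that $\Gamma_t^M$ is the \emph{marginal} of the algorithm's joint target~(Appendix~\ref{app:target-distribution}), so the inner particles are coupled across time; this is handled by the same inductive bookkeeping, invoking Corollary~\ref{cor:convas} at each of the finitely many time steps and using that finite compositions and ratios of bounded, strictly positive limits are continuous.
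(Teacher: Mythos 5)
Your overall strategy is sound and is, in expanded form, exactly what the paper relies on: the paper's own proof is a one-line deferral to Proposition~2 of \citet{iqbal2024nesting}, with Corollary~\ref{cor:convas} supplying the only new ingredient (almost-sure convergence of the inner measures $\mu_t^M$), and your induction over the finitely many steps of the Feynman--Kac recursion, combined with the continuous-mapping step for $u \mapsto u^{-\eta}$ at the strictly positive limit, is the natural way to carry that out. The one place where you diverge is the final domination step, which you leave as a sketch and flag as the ``hardest'' part, proposing a uniform-integrability argument based on the $L^p$ rates of Proposition~\ref{prop:inner-pf-lp-error}. Under the reading of Assumptions~\ref{ass:strong-mixing} and~\ref{ass:bounded-func} that the paper implicitly uses elsewhere --- namely that ``positive and bounded'' on the compact set $D_\theta$ means $0 < c_1 \leq f \leq c_2 < \infty$, which is already needed for the bounded-weights step invoking \citet{miguez2013convergence} in the proof of Proposition~\ref{prop:inner-pf-lp-error} --- this machinery is unnecessary: one then has $\mu_{s-1}^M f \geq c_1$ uniformly in $M$ and the particle realizations, hence $g_s^M \leq c_1^{-\eta}$, so the per-step multipliers are uniformly bounded above and below and bounded convergence closes the argument directly, including the interchange of the $M \to \infty$ limit with the outer expectation over the inner particle systems and the division by $\gamma_t^M 1 \to Z_t > 0$. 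If instead you insist on the literal reading where $f$ is merely pointwise positive, your concern is genuine and your proposed UI route is the right repair, but you should be aware that the same gap would then also afflict the cited result the paper leans on; as written, your proof is a correct plan whose last step is not yet fully executed, whereas the intended assumptions make that step elementary.
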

\begin{proof}
  The proof is identical to that of \citet[Proposition 2]{iqbal2024nesting}, since we have already proved weak convergence for the measures $\mu_t^{M}$~(Corollary~\ref{cor:convas}).
\end{proof}

\section{Details on Policy Amortization}
\label{app:policy-amortization}

\begin{algorithm}[h]
    \DontPrintSemicolon
    \SetKwInput{Input}{input}
    \SetKwInput{Output}{output}
    \caption{Markovian score climbing~\citep{gu1998stochastic,naesseth2020markovian}.}
    \label{alg:msc}
    \Input{Initial trajectory $z_{0:T}^0$, initial parameters $\phi_0$, step sizes $\{\gamma_i\}_{i \in \mathbb{N}}$, Markov kernel $\mathcal{K}$.}
    \Output{Local optimum $\phi^*$ of the marginal likelihood.}
    Set $k \gets 1$.\;
    \While{not converged}{
        Sample $z_{0:T}^k \sim \mathcal{K}_{\phi_{k-1}}(\cdot \mid z_{0:T}^{k-1})$.\;
        Compute $\hat{\mathcal{S}}(\phi_{k-1}) \gets \nabla_\phi \log p_\phi(z^{k}_{0:T}) \vert_{\phi=\phi_{k-1}}$.\;
        Update $\phi_k \gets \phi_{k-1} + \gamma_k \, \hat{\mathcal{S}}(\phi_{k-1})$.\;
        $k \gets k + 1$.\;
    }
    \KwRet{$\phi_k$}
\end{algorithm}

As mentioned in Section~\ref{sec:dual-inference-problem}, our goal is to maximize the normalizing constant $Z(\phi)$ of the non-Markovian Feynman-Kac model $(\Gamma_t)_{t=0}^T$ as a proxy to the EIG objective~\eqref{eq:eig_constant_noise}
\begin{equation}\label{eq:feynman-kac-duplicate}
    \Gamma_t(z_{0:t}; \phi) = \frac{1}{Z_t(\phi)} \, p(z_0) \prod_{s=1}^t p_\phi(z_s \mid z_{0:s-1}) \, g_s(z_{0:s}), \qquad 0 \leq t \leq T.
\end{equation}
We will refer to $Z(\phi)$ as the marginal likelihood. Let us define the \emph{score function} $\mathcal{S}(\phi) \coloneq \nabla_\phi \log Z_T(\phi)$ to be the derivative of the log marginal likelihood. Using Fisher's identity~\citep{cappe2005inference}, we have
\begin{align}
    \mathcal{S}(\phi) = \nabla_\phi \log Z_T(\phi)
    &= \int \nabla_\phi \log \tilde{\Gamma}_T(z_{0:T}; \phi) \, \Gamma_T(z_{0:T}; \phi) \, \dd z_{0:T} \\
    &= \int \nabla_\phi \log p_\phi(z_{0:T}) \, \Gamma_T(z_{0:T}; \phi) \, \dd z_{0:T},
\end{align}
where $\tilde{\Gamma}_T(z_{0:T}; \phi) \coloneq p_\phi(z_{0:T}) \, g_{1:T}(z_{0:T})$ is the un-normalized density from~\eqref{eq:feynman-kac-duplicate}.

To maximize $Z(\phi)$, we use the Markovian score climbing~(MSC) algorithm from \citet{gu1998stochastic, naesseth2020markovian}, presented in Algorithm~\ref{alg:msc}. Given a Markov kernel $\mathcal{K}_\phi$ that is $\Gamma_T(\cdot ; \phi)$-ergodic, MSC draws a sample from $\mathcal{K}_\phi$, evaluates the score under the sample, and updates the parameters $\phi$.
This is repeated until convergence of the policy parameters. Algorithm~\ref{alg:msc} is guaranteed to converge to a local optimum of the marginal likelihood~\citep[Proposition 1]{naesseth2020markovian}.

We construct the Markov kernel $\mathcal{K}_\phi$ as a Rao-Blackwellized conditional sequential Monte Carlo kernel~\citep{andrieu2010particle,olsson2017Paris,cardoso2023state,abdulsamad2023risk}. It is implemented as a simple modification to the IO--NPF algorithm, equivalent to the following operation. Given a \emph{reference trajectory}, at each time step in the filter, we sample $N-1$ particles conditionally on the reference trajectory having survived the resampling step. For a thorough description of the algorithm, we refer to \citet[Appendix C.4]{iqbal2024nesting}.

\section{Details of Numerical Validation}
\label{app:num-eval}

\subsection{Stochastic pendulum environment}
In this pendulum environment, the vector $x_t = [q_t, \dot{q}_t]^{\top}$ denotes the state of the pendulum, with $q_t$ being the angle from the vertical and $\dot{q}_t$ the angular velocity. The parameters of interest are $(m, l)$, the mass and length of the pendulum, while $g=9.81$ and $d=0.1$ are the gravitational acceleration and a damping constant. The design, $\xi_t \in [-1, 1]$, is the torque applied to the pendulum. 

In order to compare with the exact variant of IO--SMC\textsuperscript{2} \citep{iqbal2024nesting}, we transform the pendulum equations to obtain a conditional linear form, leading to the transformed parameter vector 
\begin{equation}
    \theta = \left[\frac{3g}{2l}, \frac{3d}{ml^2}, \frac{3}{ml^2} \right]^{\top}.
\end{equation}
The dynamics is described by the following Ito stochastic differential equation (SDE)
\begin{equation}
    \dd x_t = h(x_t, \xi_t)^{\top} \, \theta \, \dd t + L \, \dd \beta,
\end{equation}
with a drift term $h(x_t, \xi_t) = [-\sin(q), -\dot{q}, \xi_t]^{\top}$, diffusion term $L = [0, 0.1]^{\top}$ and Brownian motion $\beta$. We descritze this SDE in time using the Euler-Maruyama scheme~\citep{sarkka2019applied} with a step size $\dd t = 0.05$ and consider a horizon of $T = 50$ experiments (time steps). The initial state is fixed to $x_0 = [0, 0]^{\top}$. Finally, to maintain conjugacy, we assume a Gaussian prior over $\theta$
\begin{equation*}
    p(\theta) = \textrm{Normal}\left( \begin{bmatrix}
        14.7 \\ 0 \\ 3.0
    \end{bmatrix}, \begin{bmatrix}
        0.1 & 0 & 0 \\
        0 & 0.01 & 0 \\
        0 & 0 & 0.1
    \end{bmatrix} \right).
\end{equation*}

\subsection{Network architectures and hyperparameters}
All amortizing policies in our evaluation share a similar structure, with an encoding network that transforms the design-augmented sequences into a stacked representation $\{R(z_s)\}_{s=0}^t$ before passing them into a recurrent layer. Table~\ref{tab:policy-arch} and Table~\ref{tab:lstm-arch} provide the details. 

Moreover, the hyperparameters used for training the policies associated with IO--NPF, IO--SMC\textsuperscript{2}, and iDAD are listed in Table~\ref{tab:hyperparams-io-npf} and Table~\ref{tab:hyperparams-idad}. Finally, the evaluation of the learned policies in Table~\ref{tab:linear-pendulum} was done with $N=16$ and $M=1024$ for the EIG, and with $N=16$ and $M=10^{5}$ for the sPCE.\\
\begin{table}[h]
    \caption{The encoder architecture.}
    \label{tab:policy-arch}
    \vspace{-5pt}
    \centering
    \begin{tabular}{cccc}
        \toprule
        Layer & Description & Size & Activation \\
        \midrule
        Input & Augmented state $z$ & dim($z$) & - \\
        Hidden layer 1 & Dense & 256 & ReLU \\
        Hidden layer 2 & Dense & 256 & ReLU \\
        Output & Dense & 64 & - \\
        \bottomrule
    \end{tabular}
    \vspace{10pt}
\end{table}
\begin{table}[h]
    \caption{The recurrent network architecture.}
    \label{tab:lstm-arch}
    \vspace{-5pt}
    \centering
    \begin{tabular}{cccc}
        \toprule
        Layer & Description & Size & Activation \\
        \midrule
        Input & $\{R(z_s)\}_{s=0}^t$ & $64 \cdot (t+1)$ & - \\
        Hidden layer 1 & GRU & 64 & - \\
        Hidden layer 2 & GRU & 64 & - \\
        Hidden layer 3 & Dense & 256 & ReLU \\
        Hidden layer 4 & Dense & 256 & ReLU \\
        Output & Designs $\xi$ & dim($\xi$) & - \\
        \bottomrule
    \end{tabular}
    \vspace{10pt}
\end{table}
\begin{table}[h]
    \caption{Hyperparameters for training the inside--out nested algorithms.}
    \label{tab:hyperparams-io-npf}
    \vspace{-5pt}
    \centering
    \begin{tabular}{cccc}
        \toprule
        Hyperparameter & IO--SMC\textsuperscript{2} (Exact) & IO--SMC\textsuperscript{2} & IO--NPF (+BS)\\
        \midrule
        N & $32$ & $32$ & $32$ \\
        M & -- & $128$ & $128$ \\
        Tempering $(\eta)$ & $1.0$ & $1.0$ & $1.0$ \\
        Slew rate penalty & $0.1$ & $0.1$ & $0.1$ \\
        IBIS moves & -- & 3 & -- \\
        Learning rate & $10^{-3}$ & $10^{-3}$ & $10^{-3}$ \\
        Training iterations & $25$ & $25$ & $25$ \\
        \bottomrule
    \end{tabular}
    \vspace{10pt}
\end{table}
\begin{table}[!h]
    \caption{Hyperparameters for training iDAD.}
    \label{tab:hyperparams-idad}
    \vspace{-5pt}
    \centering
    \begin{tabular}{lc}
        \toprule
        Hyperparameter & iDAD \\
        \midrule
        Batch size & $512$ \\
        Number of contrastive samples & $16383$ \\
        Number of gradient steps & $10000$ \\
        Learning rate (LR) & $5 \times 10^{-4}$ \\
        LR annealing parameter & $0.96$ \\
        LR annealing frequency (if no improvement) & $400$ \\
        \bottomrule
    \end{tabular}
\end{table}

\end{document}